\newtheorem{proposition}{Proposition}[]
\newcommand{\xb}{\mathbf{x}}
\DeclareMathOperator*{\argmin}{\arg\!\min}
\newcommand{\STAB}[1]{\begin{tabular}{@{}c@{}}#1\end{tabular}}
\title{On the Role of Model Uncertainties in Bayesian Optimization}
\author{
Jonathan Foldager \\
 Section for Cognitive Systems, DTU Compute \\
 Technical University of Denmark \\
 Correspondence: \texttt{jonf@dtu.dk} 
 \And
  Mikkel Jordahn \\
  Section for Cognitive Systems, DTU Compute \\
  Technical University of Denmark \\
  \And
 Lars Kai Hansen \\
  Section for Cognitive Systems, DTU Compute \\
  Technical University of Denmark \\
  \And
 Michael Riis Andersen \\
  Section for Cognitive Systems, DTU Compute \\
 Technical University of Denmark \\
}
\begin{document}

\maketitle
\begin{abstract}
Bayesian optimization (BO) is a popular method for black-box optimization, which relies on uncertainty as part of its decision-making process when deciding which experiment to perform next. However, not much work has addressed the effect of uncertainty on the performance of the BO algorithm and to what extent calibrated uncertainties improve the ability to find the global optimum. In this work, we provide an extensive study of the relationship between the BO performance (regret) and uncertainty calibration for popular surrogate models and compare them across both synthetic and real-world experiments. Our results confirm that Gaussian Processes are strong surrogate models and that they tend to outperform other popular models. Our results further show a positive association between calibration error and regret, but interestingly, this association disappears  when we control for the type of model in the analysis. We also studied the effect of re-calibration and demonstrate that it generally does not lead to improved regret. Finally, we provide theoretical justification for why uncertainty calibration might be difficult to combine with BO due to the small sample sizes commonly used.
\end{abstract}


\section{Introduction}
Probabilistic machine learning provides a framework in which it is possible to reason about uncertainty for both models and predictions \citep{ghahramani2015probabilistic}. 
It is often argued that especially in high-stakes applications (healthcare, robotics, etc.), uncertainty estimates for decisions/predictions should be a central component and that they should be well-calibrated \citep{kuleshov2022calibrated}. 
The intuition behind calibration is that the uncertainty estimates should accurately reflect the reality; for example if a classification model predicts 80\% probability of belonging to class $A$ on 10 datapoints, then (on average) we would expect 8 of those 10 samples actually belong to class $A$. 
Likewise -- but less intuitively -- in regression, if a calibrated model generates a prediction $\mu$ and uncertainty estimate $\sigma$, we would see $p$ percent of the data lying inside a $p$ percentile confidence interval of $\mu$ \citep{busk2021calibrated}. 
In general, uncertainty can be divided into \textit{aleatoric} (irreducible inherent randomness in the data-generating process) and \textit{epistemic} (lack of knowledge, i.e. it can be reduced if more data is collected) \citep{hullermeier2021aleatoric}. 
However, this distinction is rarely used when evaluating uncertainty estimates for regression tasks and although this has been critiqued \citep{sluijterman2021evaluate}, it is highly non-trivial to achieve for real-world applications because it normally requires access to the underlying true function.

Uncertainty also plays a central role Bayesian Optimization (BO) \citep{snoek2012practical}, which will be the focus of this paper. As a sequential design strategy for global optimization, BO has several applications with perhaps the most popular ones being general experimental design \citep{shahriari2015taking} and model selection for machine learning models \citep{bergstra2011algorithms}. 
BO is most often used when the objective function is expensive (e.g. monetary, time consuming, or ethically) to evaluate, gradients between in- and outputs are not available, and/or data acquisition is limited to few training samples \citep{agnihotri2020exploring}. 
A BO protocol works by iteratively fitting a probabilistic surrogate model to observed values of an objective function, and using a so-called acquisition function based on the surrogate model, to select where to query the objective function next. 
In acquisition functions, there is an inherent trade-off between exploring input areas in which the surrogate model is uncertain of the underlying objective function, and exploiting areas where the surrogate model already knows that the objective value is high.
As such, it seems obvious that in order for this exploration-exploitation trade-off to be good, the probabilistic model must be well calibrated.
It is, however, still not well-described how much calibration actually affects BO procedures.
One could imagine that if calibration leads to a better model representation of the underlying objective function, as would be the general intuition, it would be natural to expect that improving calibration via so-called \textit{re-calibration} \citep{kuleshov2018accurate} will aid in finding the global optimum of that same function.

\subsection{Our Contribution}
In this paper, we set out to investigate how the model uncertainties affect BO performance by means of both numerical and theoretical perspectives. Our work is highly motivated by the general intuition and understanding in the community that BO surrogate models with better / well-calibrated uncertainty estimates will perform better (i.e. reach better final and/or total regret). In particular, our paper is concerned with studying statements such as "BO crucially relying on calibrated uncertainty estimates" \citep{springenberg2016bayesian} and that methods performing worse "due to their frequentist uncertainty estimates" \citep{deshwal2021bayesian}. But how well-calibrated do we need to be in order to achieve good BO performance? In order to investigate these questions, we provide three major contributions:

\begin{itemize}
    \item An extensive study of commonly used surrogate models by tracking their calibration errors and regrets to see if the best calibrated models also perform best in BO. 
    \item An investigation on how re-calibration and simple manipulation of model uncertainties (which essentially controls the general predictive sharpness and together with the mean its calibration) affect BO performance. If well-calibrated uncertainties leads to good BO performance, how much can we manipulate those same uncertainties and still get decent BO performance?
    \item Numerical and theoretical results to substantiate a discussion on the role of calibration in BO. Especially on the relationship between the number of re-calibration samples and the variance of the calibration curve. This is to study if few validation samples --- as is most often the case when doing BO --- provide significant and robust changes in the surrogate model.
\end{itemize}

\begin{figure*}[t!]
    \begin{subfigure}[b]{0.49\textwidth}
    \includegraphics[width=\textwidth]{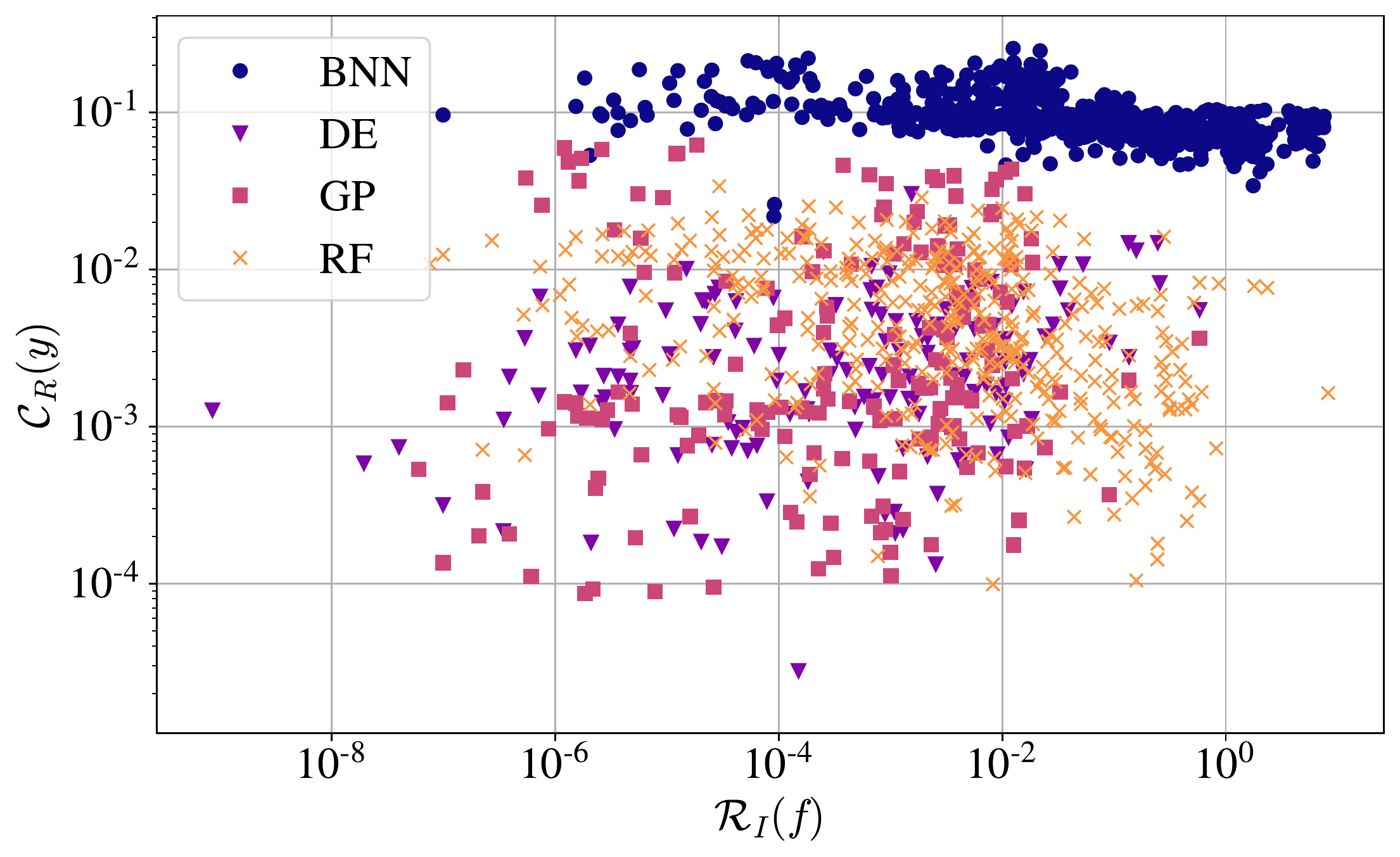}
    \caption{Test calibration after surrogates are trained on i.i.d. samples. }
    \end{subfigure}
    \begin{subfigure}[b]{0.49\textwidth}
    \includegraphics[width=\textwidth]{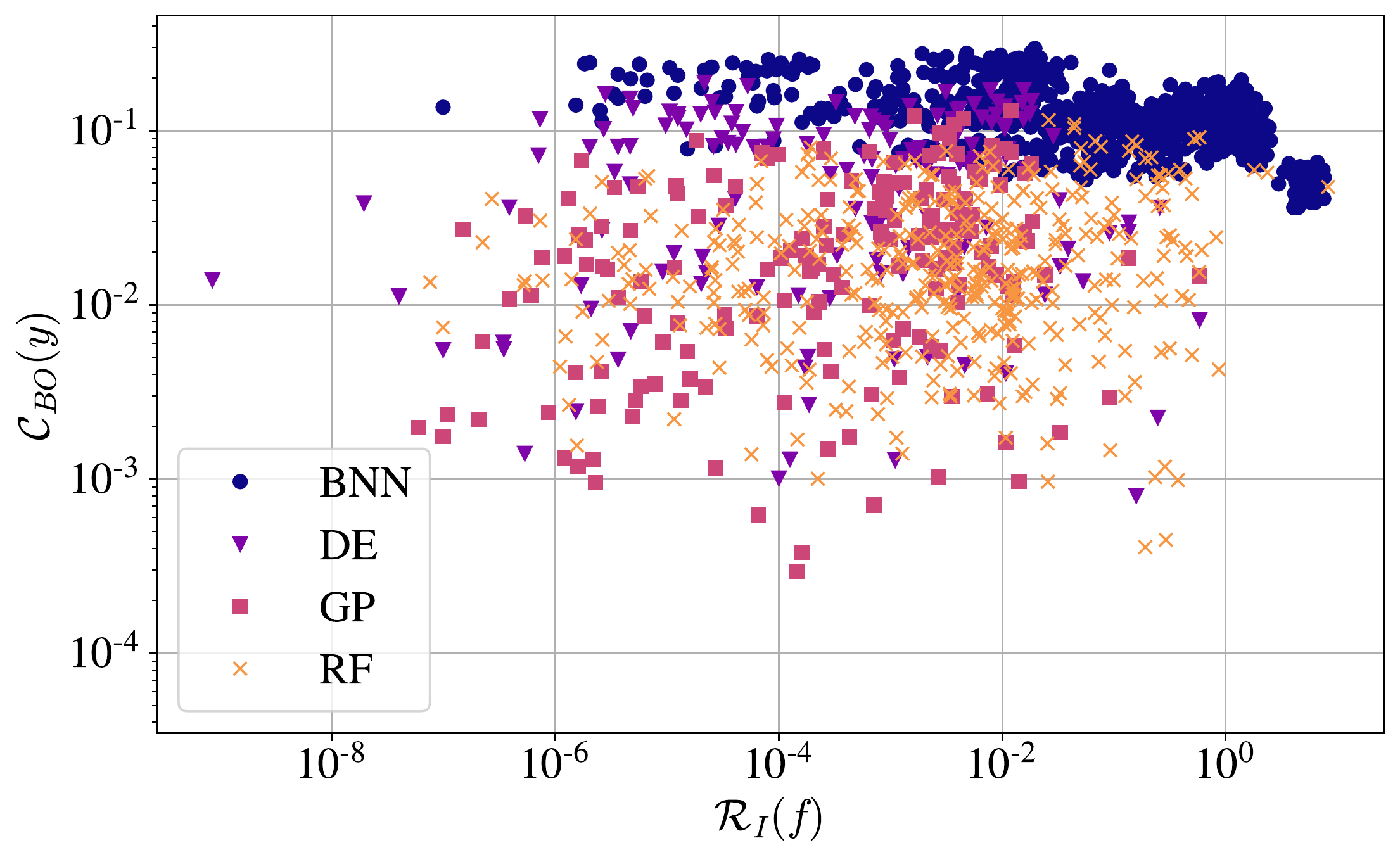}
    \caption{Test calibration after surrogates are trained on BO samples. }
    \end{subfigure}
    \caption{Regret vs. test calibration MSE \label{fig:regret-calibration-correlation}. (a) shows the calibration error (after regression) as a function of regret. (b) shows calibration error (after BO) as a function of regret. $C_R(y)$ = test calibration MSE on $y$ from model trained on 100 i.i.d. samples. $C_{BO}(y)$ = test calibration MSE from model trained on 10 i.i.d. + 90 BO samples. $R_{I}(f)$ = instant regret of the function value $f$ in the last BO iteration. }
\end{figure*}
\newpage
\subsection{Related Work}
A great deal of work has been carried out for uncertainty calibration for regression models \citep{kuleshov2018accurate,song2019distribution,ovadia2019can,busk2021calibrated,nado2021uncertainty} and the useful uncertainty toolbox \citep{chung2021uncertainty} makes it easy to assess the calibration level of various models. In the very recent work by \citet{deshpande2021calibration}, a procedure for calibrating  GPs during BO was proposed. Given the small sample sizes available in BO, the idea is to use leave-one-out cross-validation and utilize the calibration algorithm proposed in earlier work by \citet{kuleshov2018accurate}. We note that potential issues might arise from this procedure as the earlier work by \citep{kuleshov2018accurate} state multiple times their approach produces calibrated forecasts "\textit{given enough i.i.d. data}". However, the data available during BO is rarely large nor independent and identically distributed (i.i.d.), and the  goal of our work is to dive deeper into this. Other research on the role of uncertainty calibration include examples such as the work by \citet{bliznyuk2008bayesian}, where the authors propose a way of using Markov Chain Monte Carlo (MCMC) to get calibrated predictions for GPs. In work the by \citet{belakaria2020uncertainty}, the authors investigate uncertainty aware multi-objective (multidimensional output) BO and argue that due to the uncertainty incorporating strategy, their model outperforms state-of-the-art procedures.

\section{Background}
Bayesian Optimization (BO) is concerned with the optimization task of finding the global minimum $\xb^*=[x_1^*,x_2^*,...,x_D^*]^\top$ of some objective function $f(\xb)$, where $\xb$ is a $D$-dimensional vector, i.e.
\begin{equation}
    \xb^* = \argmin f(\xb).
\end{equation}
We assume that the optimization objective $f(\xb) \in \mathbb{R}$ is contaminated with noise, i.e. it is only possible to observe $y(\xb) = f(\xb) + \epsilon$, where $\epsilon$ is additive noise often assumed to follow an isotropic normal distribution. 
In many scenarios such as hyperparameter tuning of neural networks, the set of input variables $\xb$ are rarely all continuous and often no closed-form expression for $f$ exists. Hence, BO is well-suited when $f$ is a so-called "black-box" function \citep{turner2021bayesian}. 
At least two crucial decisions are to be made when using BO in practice: 1) the choice of surrogate model, which is to mimic/learn the underlying objective function $f$, and 2) the acquisition function (AF), which controls the strategy for deciding which input $\xb$ to sequentially pick. Popular choices for surrogate models include Gaussian Processes \citep{rasmussen2003gaussian, snoek2012practical} and Random Forests \citep{bergstra2011algorithms}, but in principle any probabilistic model, e.g. Deep Ensembles \citep{lakshminarayanan2017simple} or mean-field Bayesian Neural Networks \cite{springenberg2016bayesian}, can be used. For the choice of AF, \textit{Expected Improvement} (EI) as proposed by \citet{Jones1998-cu} is often used and is defined as follows
\begin{equation}
            \text{EI}(\xb) =
            (\mu(\xb) - f(\xb^+))\Phi(Z)+\sigma(\xb)\phi(Z), \label{eq:ei}
\end{equation}
if $\sigma(\xb) > 0$ otherwise $\text{EI}(\xb) = 0$, and with:
\begin{equation}
            Z(\xb) =
            \begin{cases}
            \frac{\mu(\xb) - f(\xb^+)}{\sigma(\xb)} &\text{if}\ \sigma(\xb) > 0 \\
            0 & \text{if}\ \sigma(\xb) = 0
        \end{cases},
\end{equation}

where $\mu(\xb)$ and $\sigma(\xb)$ denote the mean and standard deviation, respectively, of the surrogate function at $\xb$, $f(\xb^+)$ denotes the best function value observed so far, and $\Phi$ and $\phi$ denote the cumulative distribution function (CDF) and probability density function (PDF) of a standard normal distribution, respectively. As mentioned in the introduction, the uncertainty estimate $\sigma(\xb)$ plays a key role in BO including in this AF: large uncertainties will generally cause the second term in \cref{eq:ei} to dominate, and vice versa, when the uncertainty is relatively small, the first term will dominate. Intuitively, calibrated uncertainties would lead to more accurate estimates of expected improvements, which in turn should lead to a more data-efficient BO protocol.

\begin{table*}[t!]
\caption{Average ($\pm$ std.) metrics on benchmark data across 10 different input dimensions, 10 random initializations and 10 different problems for each dimension yielding 1000 experiments per estimate. No calibration error when doing regression is computed for the recalibrated models, since recalibration only was done during BO. Notation: $\mathcal{R}_f^I$ = instant regret on $f$ in the last BO iteration, $\mathcal{R}_f^T$ = total regret on $f$, $E_{C_y}^{R}$ = test calibration MSE on $y$ from model trained on 100 i.i.d. samples, $E_{C_y}^{BO}$ = test calibration MSE from model trained on 10 i.i.d. + 90 BO samples, $\rho_{R}$ = correlation coefficient between $\mathcal{R}_f^I$ and $E_{C_y}^{R}$, $\rho_{BO}$ = correlation coefficient between $\mathcal{R}_f^I$ and $E_{C_y}^{BO}$. $^*= p<0.05$, $^{**} = p<0.0001$ (Bonferroni corrected), BNN= Bayesian Neural Network, DE = Deep Ensemble, GP = Gaussian Process, RF = Random Forest, RS = Random Search.  \label{correlation_table}} 
\begin{center}
\begin{tabular}{llcccccc}
 & Surrogate & $\mathcal{R}_f^I$ &  $\mathcal{R}_f^T$ &  $E_{C_y}^{R}$ & $E_{C_y}^{BO}$ & \textbf{$\rho_{R}$} & \textbf{$\rho_{BO}$}\\
\toprule 
\multirow{6}{*}{\STAB{\rotatebox[origin=c]{90}{Baseline}}}& BNN &$0.5614\,\,(\pm 1.10)$&$56.3\,\,(\pm 103.9)$&$0.089\,\,(\pm 0.03)$&$0.119\,\,(\pm 0.05)$&$-0.23^{**}$&$-0.32^{**}$\\
&DE&$0.0022\,\,(\pm 0.02)$&$10.7\,\,(\pm 19.3)$&$0.004\,\,(\pm 0.01)$&$0.024\,\,(\pm 0.03)$&$0.07$&$0.02$\\
&GP&$\mathbf{0.0013}\,\,(\pm 0.02)$&$\mathbf{6.8}\,\,(\pm 16.0)$&$\mathbf{0.003}\,\,(\pm 0.01)$&$\mathbf{0.017}\,\,(\pm 0.02)$&$0.02$&$0.02$\\
&RF&$0.0316\,\,(\pm 0.29)$&$18.5\,\,(\pm 44.6)$&$0.005\,\,(\pm 0.01)$&$0.039\,\,(\pm 0.03)$&$-0.04$&$-0.003$\\
&RS&$0.3010\,\,(\pm 0.67)$&$42.3\,\,(\pm 84.5)$&-&-&-&-\\
&All&-&-&-&-&$0.28^{**}$&$0.16^{**}$\\
\hline
\multirow{4}{*}{\STAB{\rotatebox[origin=c]{90}{Recal.}}}& BNN &$0.5858\,\,(\pm 1.14)$&$55.5\,\,(\pm 101.7)$&-&$0.078\,\,(\pm 0.028)$&-&-\\
&DE&$0.0038\,\,(\pm 0.03)$&$12.4\,\,(\pm 21.5)$&-&$0.017\,\,(\pm 0.012)$&-&-\\
&GP&$\mathbf{0.0012}\,\,(\pm 0.02)$&$\mathbf{8.9}\,\,(\pm 28.5)$&-&$\mathbf{0.012}\,\,(\pm 0.012)$&-&-\\
&RF&$0.0396\,\,(\pm 0.32)$&$19.8\,\,(\pm 49.1)$&-&$0.014\,\,(\pm 0.013)$&-&-\\
\bottomrule
\end{tabular}
\end{center}
\end{table*}

\paragraph{Uncertainty Calibration}
Following the work by \citet{kuleshov2018accurate}, a regression model is well-calibrated if approximately $q$ percent of the time test samples fall inside a $q$ percent confidence interval of the predictive distribution. Thus the predictive distribution should be close to the true data distribution. For regression tasks, the model calibration can be assessed using the mean square error $E_{C_y} = \frac{1}{P} \sum_p w_p (C_y(p) - p)^2$ for calibration given by
\begin{equation}\label{eq:Cy}
    C_y(p) = \frac{1}{N_T} \sum_{t=1}^{N_T} \mathbb{I} [ y_t  \leq F_t^{-1}(p) ],
\end{equation}
where $F_t^{-1}$ is the quantile function, i.e. $F_t^{-1}(p)  \equiv \inf\limits_y \{ y \: | \: p \leq F_t(y) \},$ for the $t$'th datapoint evaluated at percentile $p$, $\mathbb{I}$ is an indicator function and $w_p$ can be chosen to adjust the importance of percentiles with fewer datapoints. Throughout this paper, we assume $w_p = 1 \,\, \forall \,p$. The closer $E_{C_y}$ is to zero, the better calibrated the model is. 

\paragraph{Recalibration}
\citet{kuleshov2018accurate} also proposes a general procedure for recalibrating any model. A so-called recalibrator model $R$ is trained on independent and identically distributed (i.i.d.) validation set and subsequently, applied to readjust the CDF of the model's predictive distribution $F_t$ for some observation $y_t$, i.e. the recalibrated predictive distribution is $R \circ F_t$. This is done via learning an isotonic mapping: $R: \left[0, 1\right] \rightarrow \left[0, 1\right]$ from the predicted probabilities of events of the form $\left(-\infty, y_t\right]$ to the corresponding empirical probabilities. See Alg. 1 in \citet{kuleshov2018accurate} for more details.
%
After training the recalibrator model $R$, the relevant summary statistics (e.g. moments and intervals) of the re-calibrated distributions can be computed numerically from $R \circ F_t$.

\section{Experiments}
In this section, we describe a collection of numerical experiments designed to study the relationship between calibration and regret. We focus our study on four popular models, namely Gaussian Processes (GPs), Random Forests (RFs), Deep Ensembles (DEs) and mean-field Bayesian Neural Networks (BNNs). For GPs, DEs and BNNs, we assume an isotropic Gaussian likelihood and for RFs, we impose a Gaussian predictive distribution, where the mean and variance are estimated from the tree predictions. Our experiments are based on both synthetic and real-world data: for experiments with synthetic data, we use the common benchmark suites for optimization called Sigopt \citep{jamil2013literature, dewancker2016stratified} and for the real-world data, we apply BO to hyperparameter tuning of neural networks for classifying the MNIST dataset \citep{mnist}.

\paragraph{Synthetic Data Experiments}
We randomly sample ten problem instances (i.e. different objective functions) from the benchmark suite in each dimension ($ D \in \{ 1, 2, .., 10\}$). For each problem, we repeat the experiment ten times using different random initialization of the BO routines. A total of one thousand experiments per surrogate function is thus conducted. Ten initial samples followed by 90 iterations is used consistently throughout the experiments.
Normal distributed noise is added to the objective functions and all experiments are carried out using the EI acquisition function specified in  \cref{eq:ei}. 

Our key performance metrics are regret, calibration error, sharpness as defined in the following.
We report the calibration error $E_{C_y}$ for the output variable $y = f + \epsilon$ as being the mean squared calibration error evaluated on a large i.i.d. test set ($N_{\text{test}} = 1000$) as
\begin{equation}\label{eq:cal_error}
    E_{C_y} = \frac{1}{P} \sum_{j=1}^P (C_y(p_j) - p_j)^2,
\end{equation}
where $C_y(p_j)$ is defined in eq. \eqref{eq:Cy} and for $0\leq p_1 \leq p_2 ... \leq p_P \leq 1$ as suggested by \citet{kuleshov2018accurate}. We use $P=20$ with equidistant $p_j$ values and quantify the BO performance on synthetic experiments using the instantaneous regret metric measured after the last BO iteration,
\begin{align}
    \mathcal{R}_f^I = f_{\text{min}} - f(x^*_T),
\end{align}
where $f(x)$ is the true underlying function, $f_{\text{min}} \equiv \min\limits_x f(x)$ is function value at the true global minimum, and $x^*_T \equiv \arg\min_{x_t} \{ y(x_t) \}_{t=1}^T$ is the input value for the best observation after $T$ iterations. The subscript $f$ in $\mathcal{R}_f^I$ indicates that we compute the regret using the true function, which is possible for synthetic data, whereas for real-data, we will use the observed values to estimate the regret denoted by $\mathcal{R}_y^I$.
We also report total regret defined as $\mathcal{R}_f^T = \sum_{i=1}^T \left[f_{\text{min}} - f(x^*_i)\right],$
using $T=90$ in all experiments, yielding 100 datapoints for each experiment. Finally, all regret values are reported after standardizing objective function values and we report the sharpness as the average negative entropy of the predictive distributions.
See further experimental details in the appendix.  Code will be released on GitHub along with the camera-ready version.

\paragraph{Correlation between calibration error and regret}
Our first experiment is designed to compare calibration and regret for all models. The results are summarized in \cref{correlation_table}. First, we compare the BO performance for all surrogate models by computing the total and instant regret for all models for all problem instances. It is seen in the table that GPs outperform all methods wrt. both types of regret, but are only slightly better than Deep Ensembles. It is also noted that the mean-field BNN is performing substantially worse than the other models, including uniform random sampling.

We quantify the calibration for each surrogate model in two ways: i) in a pure regression setting with i.i.d. data ($E_{C_y}^R$) and ii) using the samples collected during BO ($E_{C_y}^{BO}$) using the same sample size $N = 100$. \cref{correlation_table} shows that GPs, RFs, and DEs all achieve comparable calibration error with the GPs being marginally better. Again, all three methods clearly outperform the mean-field BNN in both metrics.

\begin{table*}[t!]
\caption{Average ($\pm$ std.) metrics on hyperparameter tuning of neural networks classifying MNIST data. We tested linear correlation between regret and calibration levels across ten random seeds, and found no significant correlation across models. Notation: $\mathcal{R}_y^I$ = instant regret on $y$ in the last BO iteration, $\mathcal{R}_y^T$ = total regret on $y$, $E_{C_y}$ = test calibration MSE on $y$, $\mathcal{S}$ = mean predictive posterior sharpness. BNN= Bayesian Neural Network, DE = Deep Ensemble, GP = Gaussian Process, RF = Random Forest, RS = Random Search.  \label{mnist_table}} 
\begin{center}
\begin{tabular}{llcccc}
 & & $\mathcal{R}_y^I$ &  $\mathcal{R}_y^T$ &  $E_{C_y}$ &  $\mathcal{S}$ \\
\toprule 
\multirow{5}{*}{\STAB{\rotatebox[origin=c]{90}{Baseline}}}&BNN&$0.064\,\,(\pm 0.044)$ & $10.811\,\,(\pm 3.578)$ & $0.091\,\,(\pm 0.004)$ & $0.511\,\,(\pm 0.140)$ \\
&DE&$\mathbf{0.003}\,\,(\pm 0.003)$ & $\mathbf{7.161}\,\,(\pm 2.028)$ & $0.052\,\,(\pm 0.020)$ & $-0.176\,\,(\pm 0.150)$ \\
&GP&$0.005\,\,(\pm 0.003)$ & $8.023\,\,(\pm 2.404)$ & $\mathbf{0.019}\,\,(\pm 0.021)$ & $-0.388\,\,(\pm 0.407)$ \\
&RF&$0.013\,\,(\pm 0.004)$ & $7.676\,\,(\pm 2.244)$ & $0.022\,\,(\pm 0.015)$ & $0.082\,\,(\pm 0.373)$ \\
&RS&$0.022\,\,(\pm 0.011)$ &-&-&-\\
\hline
\multirow{4}{*}{\STAB{\rotatebox[origin=c]{90}{Recal.}}}&BNN&$0.160\,\,(\pm 0.056)$ & $17.823\,\,(\pm 5.266)$ & $0.077\,\,(\pm 0.008)$ & $0.262\,\,(\pm 0.282)$ \\
&DE&$\mathbf{0.002}\,\,(\pm 0.002)$ & $\mathbf{6.781}\,\,(\pm 1.621)$ & $0.064\,\,(\pm 0.007)$ & $0.256\,\,(\pm 0.090)$ \\
&GP&$0.005\,\,(\pm 0.003)$ & $6.894\,\,(\pm 2.974)$ & $\mathbf{0.041}\,\,(\pm 0.014)$ & $0.001\,\,(\pm 0.455)$ \\
&RF&$0.011\,\,(\pm 0.006)$ & $7.250\,\,(\pm 1.843)$ & $0.006\,\,(\pm 0.006)$ & $-0.677\,\,(\pm 0.216)$ \\
\bottomrule
\end{tabular}
\end{center}
\end{table*}
\cref{correlation_table} also shows the Pearson correlation between the calibration error and instantaneous regret (last two columns). 
The correlation coefficient between $E_{C_y}^R$ and $\mathcal{R}_f^I$ when computed across all surrogate models is $0.28$ ($p<10^{-5}$) and similarly, the correlation coefficient between $E_{C_y}^{BO}$ and $\mathcal{R}_f^I$ is $0.16$ ($p<10^{-5}$). 
We also report the two correlation metrics conditioned on the type of model. Interestingly, the resulting correlations are much weaker and generally not statistically significant, leading to an instance of Simpson's paradox \citep{Wagner1982SimpsonsPI}. Again the mean-field BNN is an exception, which actually shows a significant negative correlation. 
These results suggest that it is not necessarily the degree of calibration, but the surrogate model that drives the performance. \cref{fig:regret-calibration-correlation} shows a scatter plot of the same data and here is seen that conditioned on a given model, the trend disappears.
Finally, we also apply a leave-one-out recalibration procedure \citep{kuleshov2018accurate} and observe, surprisingly, that this does not improve regret in general.

\paragraph{Hyperparameter tuning on neural networks}
Next, we perform a similar experiment on a real-world BO problem by means of hyperparameter tuning of (fully connected, single hidden layer) neural networks for classifying MNIST digits. We create a grid of the hyperparameters epochs ([$1$, $10$], step size 1), dropout rate ([$0$, $0.8$], step size 0.08), learning rate
([$10^{-5}$, $0.1$] equally spaced on a logarithmic scale), hidden layer ([$1$, $300$], step size 30) and batch size ([$8$, $256$] step size 32).

Looking at \cref{mnist_table}, we observe several interesting findings. 
First, GPs and DEs look superior to the other methods, but also the least sharp. 
We also observe that the recalibration procedure does not appear to improve the regret significantly for any model (which is consistent with our synthetic experiments), but the calibration error on an independent large test set is actually worse for DE and GP after recalibration. 
In \cref{sec:discussion}, we provide a discussion founded in theory of why this might be the case. However, recalibration seems to on average have a sharpening effect on all models except for the RF, where it "softens" its posterior variance.

\begin{figure*}[t!]
    \begin{subfigure}[b]{0.33\textwidth}
    \includegraphics[width=\textwidth]{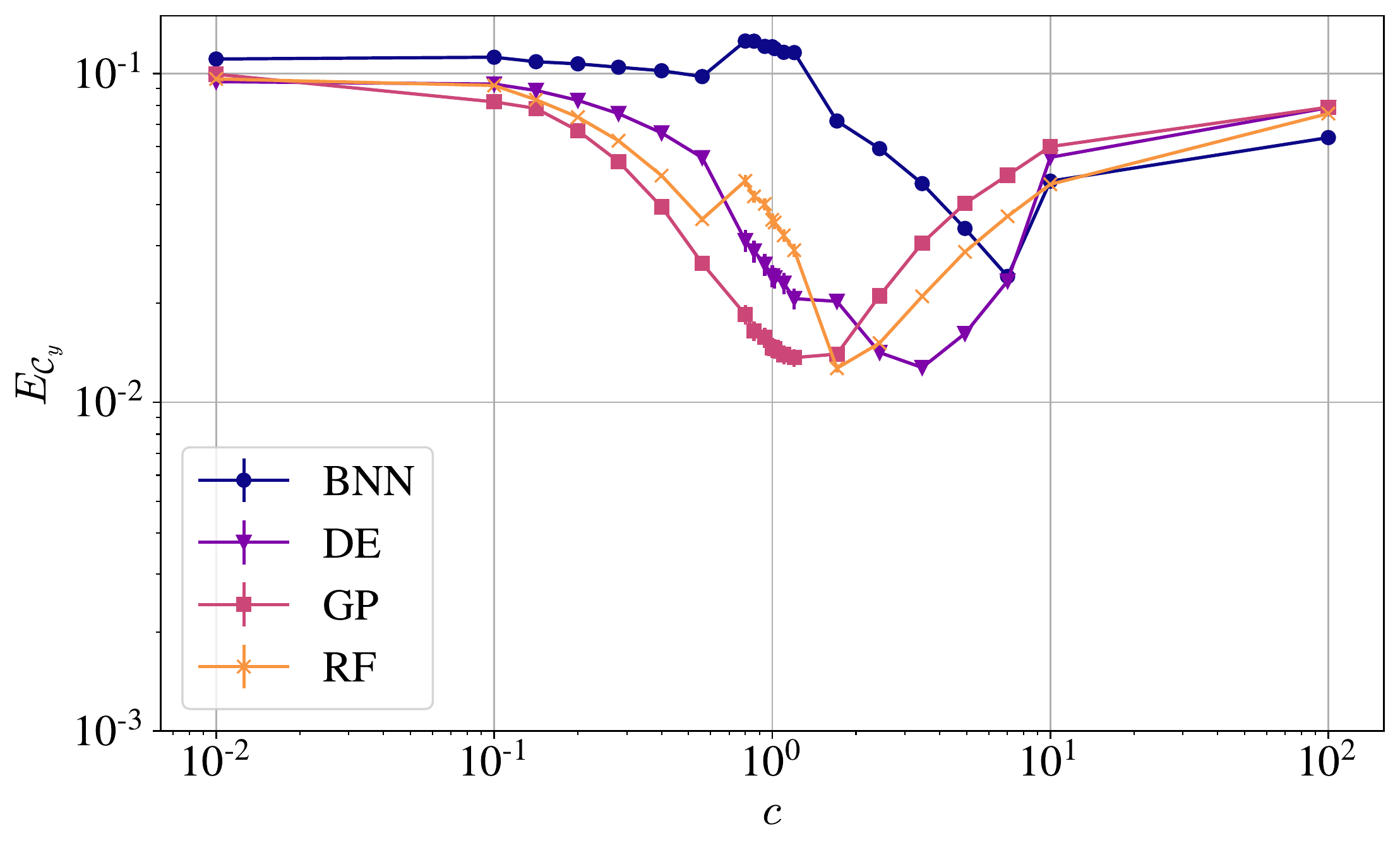}
    \caption{Calibration after BO protocol}
    \end{subfigure} 
    \begin{subfigure}[b]{0.33\textwidth}
    \includegraphics[width=\textwidth]{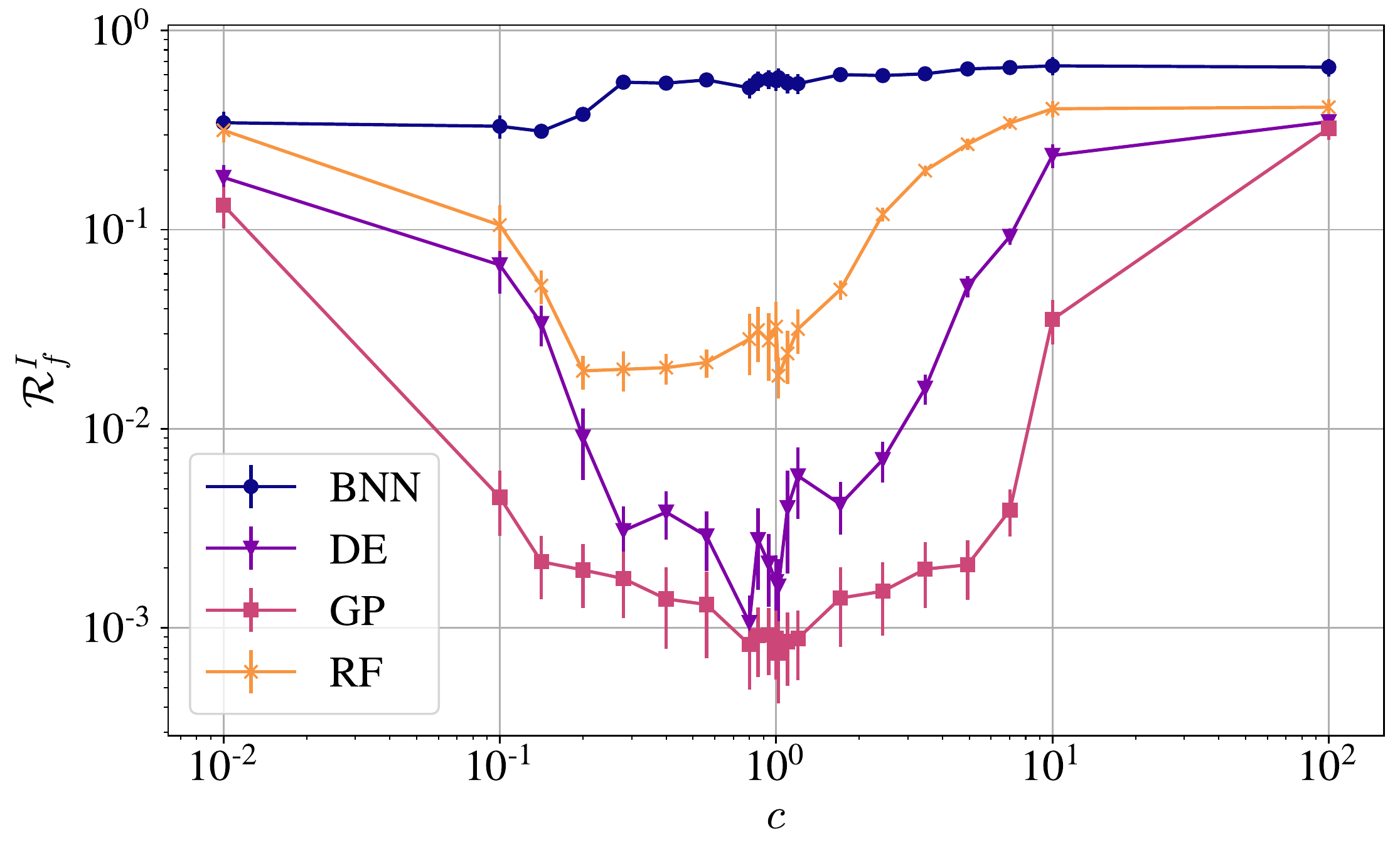}
    \caption{Regret after BO protocol}
    \end{subfigure}
    \begin{subfigure}[b]{0.33\textwidth}
    \includegraphics[width=\textwidth]{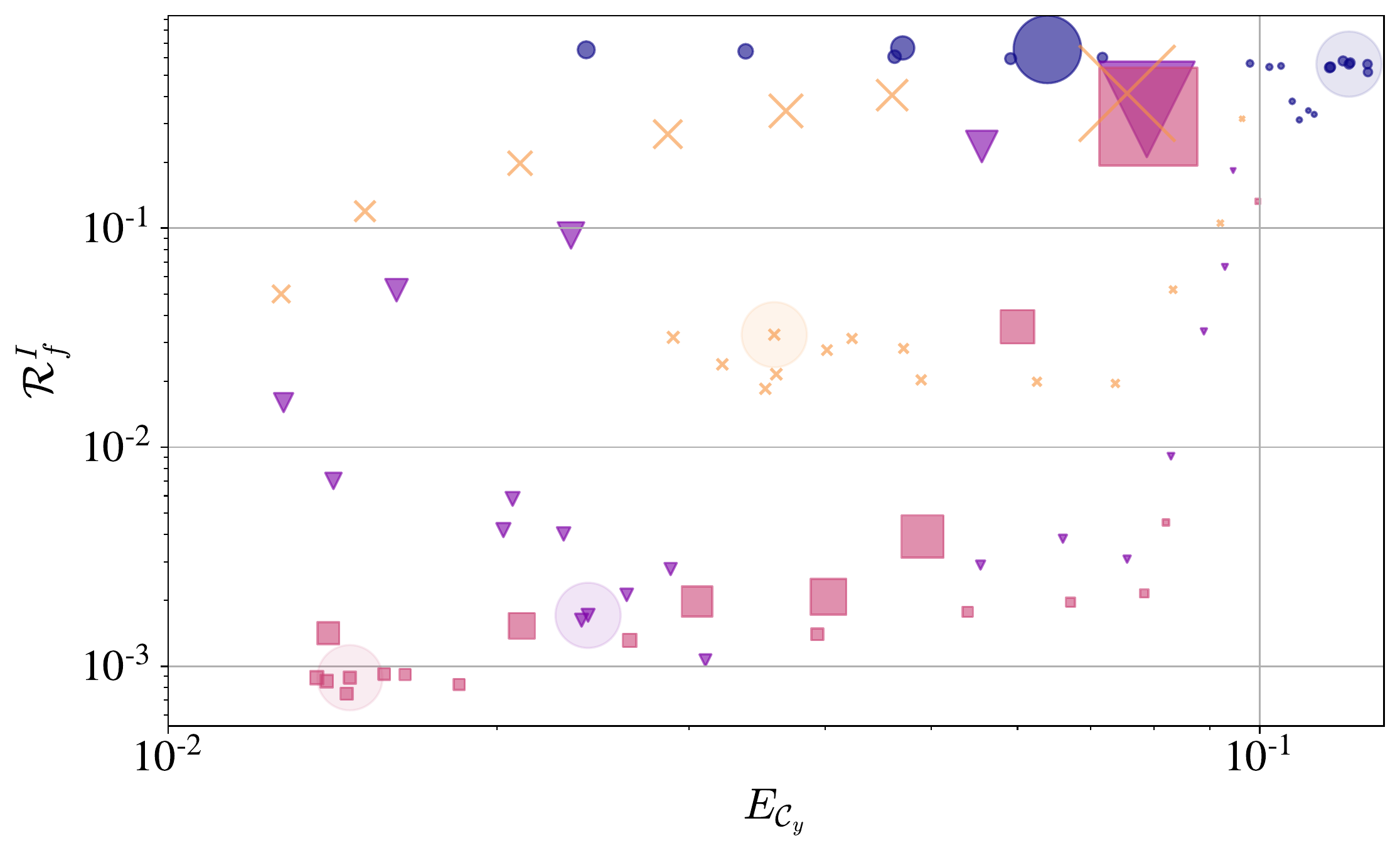}
    \caption{$E_{C_y}$ vs. $\mathcal{R}_f^{I}$ for various perturbations.}
    \end{subfigure}\\
    \caption{The effect on test calibration and regret when disturbing the posterior predictive uncertainty by $c\cdot \sigma(\xb)$ during the BO protocol. (a) Shows the overall calibration error of each model when a perturbation of $c\cdot \sigma(\xb)$ is done in each iteration, (b) shows the corresponding instant regret, and (c) depicts how regret and calibration varies together for the same experiments where the size of the points reflect its relative predictive sharpness (small markers being sharper than big markers and circled points are when $c=1$). \label{fig:regret-calibration-std-change}}
\end{figure*}

\paragraph{Disturbances in predictive uncertainties}
We finally study the effect of explicitly manipulating the predictive uncertainty of each model during the BO protocol. In \cref{fig:regret-calibration-std-change} we show the change in calibration (a) and regret (b) as a function of multiplying standard deviation of the predictive distribution by a constant $c$. Several interesting observations is contained in \cref{fig:regret-calibration-std-change}. First, all models exhibit the smallest calibration error at $c > 1$, which indicates some degree of overconfidence, and increasing the predictive variance slightly will in general lead to a better calibrated model. Second, all models are relatively robust to these uncertainty manipulations when it comes to the final regret obtained in the BO routine (b). Finally, in subfig. (c) we plot the calibration error together with regret for each of these manipulations $c$, where each marker is scaled with the corresponding mean predictive sharpness (sharper being smaller). And we find a highly interesting pattern: it appears that for the same calibration error, sharper models are associated to lower regrets, which might indicate that it is beneficial for BO if the surrogate model is overconfident rather than underconfident. This is likely related to the exploitation and exploration trade-off of the EI acquisition function. However, we will leave a more extensive investigation of this phenomenon for future work.

\vspace{-0.1cm}

\section{Discussion and Summary \label{sec:discussion}}

\vspace{-0.1cm}

In the previous section, we described and performed a number of numerical experiments to analyze the relationship between calibration and regret for BO. In this section, we will summarize some of the key take-aways as well as expand the analysis with a theoretical perspective.
\\
\\
\textbf{Take-away 1: Gaussian processes and Deep Ensembles work well for BO.} Our results for synthetic data is consistent with the apparent consensus that GPs are strong surrogates for BO and that they outperform the competing methods in terms of regret (both total and instant) (see \cref{correlation_table}. In the experiments with hyperparameter tuning of a neural network, DE and GP were superior to the competing methods in terms of regret. Here it is worth the highlight that the deep ensembles achieved slightly smaller regrets on average than the GPs, but the GPs are faster and simpler to train. In both experiments, the mean-field BNN performed significantly worse than all other methods, including random search. Similar behaviour has also been observed in other experimental design settings, e.g. active learning \citep{foong2019expressiveness}. In terms of model calibration, the GPs performed slightly better than the RFs and DEs for both synthetic and real data, but the difference is more pronounced in the hyperparameter tuning experiment (see \cref{correlation_table} and \cref{mnist_table}). Again, we notice that the mean-field BNNs are inferior to the other methods in both experiments.
\\
\\
\textbf{Take-away 2: The choice of model family is more important than calibration for BO performance.}
For the synthetic data, our analysis showed a positive correlation ($\rho \approx 0.28)$ between instant regret and calibration error, when computed across all dimensions, seeds and surrogates models. This suggests that models with low calibration errors are generally associated with low regret, i.e. strong BO performance (see \cref{correlation_table} and \cref{fig:regret-calibration-correlation}). Surprisingly, when we control for the type of surrogate model, the correlation vanishes (see table S1 in supplement). That is, within each model family, BO trials with lower calibration error is generally not associated with better BO performance. 
\\
\\
\begin{figure*}[t!]
    \includegraphics[width=\textwidth]{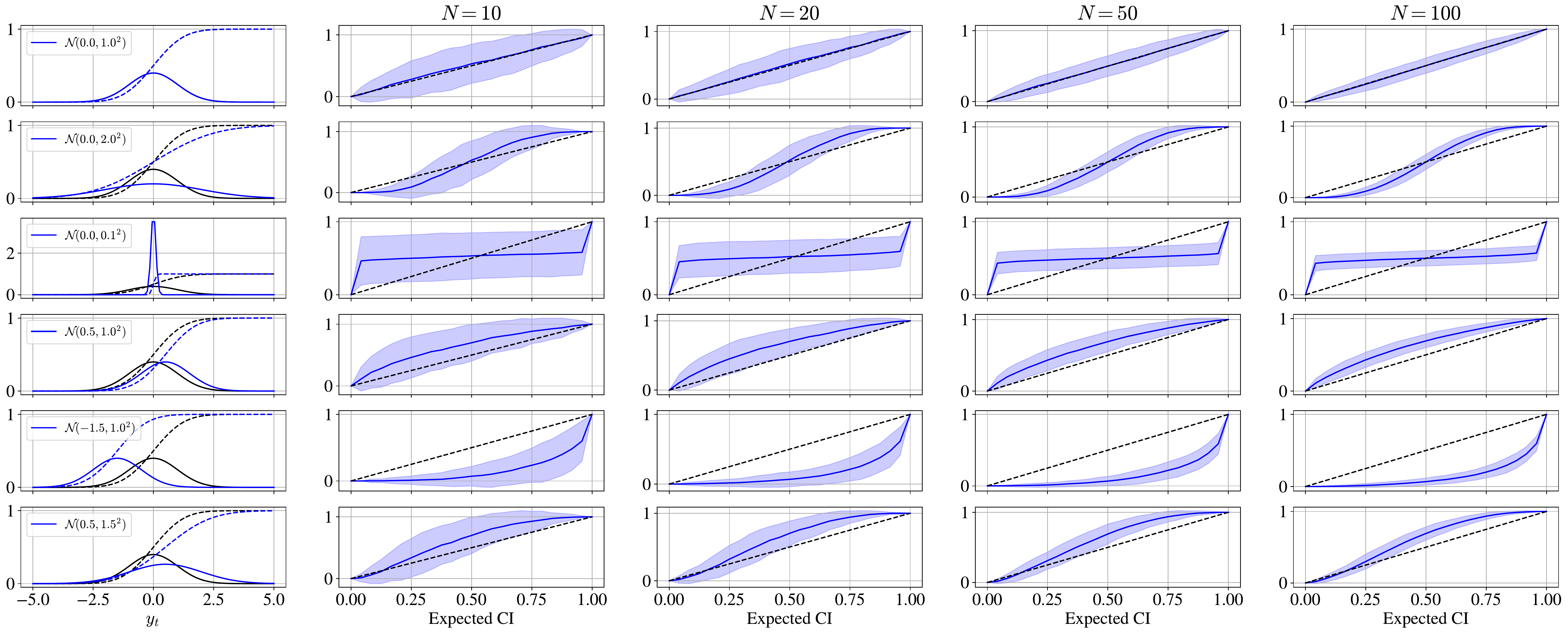}
    \caption{Examples of calibration curves computed on various number of test examples $N$, when the true data comes from a standard Gaussian and the model (left plots) varies (each row). Even in the best case scenario when samples are i.i.d., a large sample-to-sample variance can be expected in the ranges of $N$ for which BO normally operates. Calibration curve distributions are made from 100 random seeds, and the intervals corresponds to two times the standard deviation. \label{fig:n-vs-variance}}
\end{figure*}

\textbf{Take-away 3: BO performance is robust to minor changes in calibration, but sufficiently bad calibration leads to bad BO performance.}
To investigate this relationship further, we performed an intervention experiment, where we explicitly perturbed the posterior predictive distributions of the models to manipulate the degrees of calibration (see \cref{fig:regret-calibration-std-change}). More specifically, we scaled the standard deviation of all the posterior predictive distributions by a constant $c > 0$, where $0 < c < 1$ leads to more confident models, $c > 1$ leads to less confident models, and $c = 1$ is the baseline. \cref{fig:regret-calibration-std-change}(b) indicates that the GPs are more robust to such perturbations compared to the other methods and performed uniformly better than the other methods for all values of tested values of $c \in \left[10^{-2}, 10^2\right]$. 
The figure also shows that DEs and GPs achieve optimal regret for $c = 1$, indicating that the models are indeed calibrated 'enough' to carry out the BO task. 
The scatter in \cref{fig:regret-calibration-std-change}(c) plots the average instant regret as a function of calibration. The size of markers in the plot is controlled by the sharpness of the predictive distributions, and here it is clearly seen that overconfident models are generally performing better than underconfident models. 
Here we also observed a positive correlation, but interestingly, this figure also indicates that BO performance is fairly robust to the calibration error. 
That is, perturbing the predictive distribution by a small amount does not severely harm the BO performance and an extremely low calibration error is not required for decent BO performance. 
On the other hand, the figure also reveals that if the calibration become sufficiently bad, then BO performance drops quickly. 
Specifically, in this experiment, when the calibration error exceeded $\approx 0.08$, then the regret for all methods increased rapidly. 
Finally, the figure also suggests that the way a given model is miscalibrated matters. 
These observations can probably be explained by the fact that EI behaves more and more like random search, when the models become more and more uncertain.
\\
\\
\textbf{Take-away 4: Re-calibration does generally not improve BO performance.}
We further investigated the potential benefit of re-calibrating the surrogate models during the BO process. In our experiments with synthetic data, the re-calibration procedure leads to improved the calibration metric for all methods, however, the re-calibration did not improve regret for any methods. In fact, the regret increased slightly after re-calibration for several models. For the hyperparameter tuning experiment, re-calibration showed a minor positive effect on regret for DEs and RFs, but it is not clear whether the increase is due to random chance or actually an effect of re-calibration since the improvements are small relatively to the estimation uncertainty. Therefore, our results indicate that re-calibration does generally not improve regret, but we note that it does indeed introduce more variability in the results. We investigate these observations further from a theoretical perspective shortly. If the only goal of re-calibration is to achieve a better exploration/exploitation strategy, it might be more beneficial to tune the exploration parameter $\xi$ rather than doing re-calibration. However, there might be situations where one does not only wish to achieve better regret, but also learn about the underlying function while doing BO; and in such cases, re-calibration might be more relevant.

\textbf{Hypothesis: Calibration curves are not reliable for small sample sizes.}
Recent work by \cite{deshpande2021calibration} observe that re-calibration might aid BO by yielding smaller total regret in some trials and smaller instant regret for the BO last iteration in fewer trials. However, our experiments indicates that re-calibration does not improve BO performance in general and in fact, can degrade BO performance. Noting that important prior work states that re-calibration is feasible given  "enough i.i.d." data \cite{kuleshov2018accurate}, we hypothesize that this somewhat surprising observation in \cite{deshpande2021calibration} can be explained by the small sample sizes typically used in BO. Furthermore, we also note that the sample collection during BO is not i.i.d. due to the sequential nature of BO algorithms. In the following, we investigate this hypothesis using both theoretical and empirical analysis. 
For this analysis, our starting point will be a simple regression setting, where $p_y(y|x)$ denotes the true data generating distribution of $y$ given an input $x$. We further assume a trained model with predictive distribution  $p_t(y|x)$ aiming to mimic $p_y$ via training samples. Consider now the task of assessing the calibration of model using a set of i.i.d. validation samples $\{y_1,y_2,...,y_N\}$. 
Given the typical sample sizes used in BO, a natural question to ask is how accurate can we asses the calibration curve as a function of the size of the validation set $N$? 
We illustrate this in \cref{fig:n-vs-variance}, where the true data generating distribution $p_y(y|x) = \mathcal{N}(y|0,1)$ is approximated by six different model distributions $p_t(y|x)$ (one for each row). The first column shows the PDF and CDF of the true distribution and the model distribution in blue and black, respectively. Each of the subsequent columns shows the estimated calibration curves as a function of the number validation samples $N$. We repeat this experiment one hundred times and display the mean and confidence intervals corresponding to $\pm 2$ standard deviations. 

\textbf{Theoretical analysis of empirical calibration}
As expected, the sampling distributions for the calibration curves are wide for small samples and the variance shrinks with the number of samples, but how fast does the variance decay? We provide a simple yet powerful theoretical statement:
\begin{proposition}\label{prop1}
Let $F_i$ be the CDF of the predictive distribution for the $i$'th observation and let $\{ y_i \}_{i=1}^N$ be i.i.d. samples $y_i \sim p_y$. For $\mathcal{C}_y(p) = \frac{1}{N} \sum_{i=1}^N \mathbb{I}\left[y_i \leq F_i^{-1}(p)\right]$, then the variance of $C_y(p)$ decays as $\mathbb{V}\left[\mathcal{C}_y(p)\right] = \mathcal{O}(N^{-1})$.
\label{thm:var-vs-n}
\end{proposition}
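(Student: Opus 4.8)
The plan is to recognize $\mathcal{C}_y(p)$ as an empirical mean of independent Bernoulli random variables and then bound its variance term by term using the elementary inequality $q(1-q)\le \tfrac14$.

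Concretely, fix $p\in[0,1]$ and set $Z_i := \mathbb{I}\!\left[y_i \le F_i^{-1}(p)\right]$, so that $\mathcal{C}_y(p) = \tfrac1N\sum_{i=1}^N Z_i$. Regarding the trained model --- hence each predictive CDF $F_i$ and its quantile function $F_i^{-1}$ --- as fixed, each $Z_i$ is a deterministic function of $y_i$ alone; since the $y_i$ are independent, so are the $Z_i$. Each $Z_i$ is Bernoulli with parameter $q_i := \mathbb{P}\!\left[y_i \le F_i^{-1}(p)\right] = G\!\left(F_i^{-1}(p)\right)$, where $G$ is the common CDF of the $y_i$; crucially this holds whether or not the model is accurate or calibrated, the $q_i$ being simply whatever numbers in $[0,1]$ they happen to be. Independence then gives
\[
\mathbb{V}\!\left[\mathcal{C}_y(p)\right] = \frac{1}{N^2}\sum_{i=1}^N q_i(1-q_i) \;\le\; \frac{1}{N^2}\cdot\frac{N}{4} = \frac{1}{4N} = \mathcal{O}(N^{-1}),
\]
with an explicit, model-independent constant.

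There is no genuinely hard step; the only point requiring care is the independence reduction, i.e. that one treats the trained model and the evaluation inputs as fixed so that the indicators become independent functions of the i.i.d. responses. If instead the inputs $x_i$ are themselves taken random and i.i.d., the law of total variance contributes an additional term $\mathbb{V}\big[\tfrac1N\sum_i q_i(X_i)\big]$, which is again $\mathcal{O}(N^{-1})$ as the variance of an average of i.i.d. bounded quantities, so the conclusion is unaffected. One could optionally note that the same Bernoulli-sum bound propagates to the aggregate calibration error $E_{C_y}$ of \cref{eq:cal_error}, although a sharp statement there would require tracking the dependence across the different percentiles $p_j$ and is not needed for the proposition as stated.
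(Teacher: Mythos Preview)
Your argument is correct and coincides with the paper's own proof: write $\mathcal{C}_y(p)$ as an average of independent Bernoulli indicators $Z_i$, use $\mathbb{V}[Z_i]=q_i(1-q_i)\le \tfrac14$, and conclude $\mathbb{V}[\mathcal{C}_y(p)]\le \tfrac{1}{4N}$. The additional remarks about conditioning on the trained model and about random inputs are fine clarifications but not part of the paper's presentation.
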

\begin{proof}
Let $\mathcal{C}_y(p) = \frac{1}{N} \sum_{i=1}^N z_i$ for $z_i \equiv \mathbb{I}\left[y_i \leq F_i^{-1}(p)\right]$. The variance of $\mathcal{C}_y(p)$ is then given by 
\begin{align*}
\mathbb{V}\left[\mathcal{C}_y(p) \right] &= \mathbb{V}\left[\frac{1}{N} \sum_{i=1}^N z_i\right]
\end{align*}
by independence each $z_i$, and
\begin{align*}
\mathbb{V}\left[\mathcal{C}_y(p) \right] &\leq \frac{1}{N^2} \sum_{i=1}^N \sup\limits_i \mathbb{V}\left[z_i\right]
= \frac{1}{N^2} \sum_{i=1}^N \frac{1}{2^2}= \frac{1}{N} \frac{1}{2^2}.
\end{align*}
Hence, it follows the variance of $\mathcal{C}_y(p)$ is bounded by
\begin{align}
\mathbb{V}\left[\mathcal{C}_y(p)\right] \leq \mathcal{O}\left(N^{-1}\right).
\end{align}
See supplementary material for detailed proof.
\end{proof}
To validate this finding, we now expand the experiment from \cref{fig:n-vs-variance}. 
In \cref{fig:std_calibration_n}, we have conducted a numerical experiment, where we sample 100 models of the form $p_t(y|x) = \mathcal{N}(y|\mu, \sigma)$, where $\mu \sim \mathcal{N}(0, 1)$ and $\sigma \sim \text{LogNormal}(1,1)$.
For each model, we compute 100 calibration curves for each sample size $N \in [5,1000]$ and subsequently estimate the variance of those curves. \cref{fig:std_calibration_n} shows the maximum standard deviation as a function of the sample size $N$.
Perfectly consistent with the predictions from \cref{prop1}, we observe that maximum standard deviation decays proportional to $ \frac{1}{\sqrt{N}}$. We note here that our result in \cref{thm:var-vs-n} is irrespective of both the data and model distributions, and hence, the result is very general.
\begin{figure}[t!]
\centering
    \includegraphics[width=0.6\columnwidth]{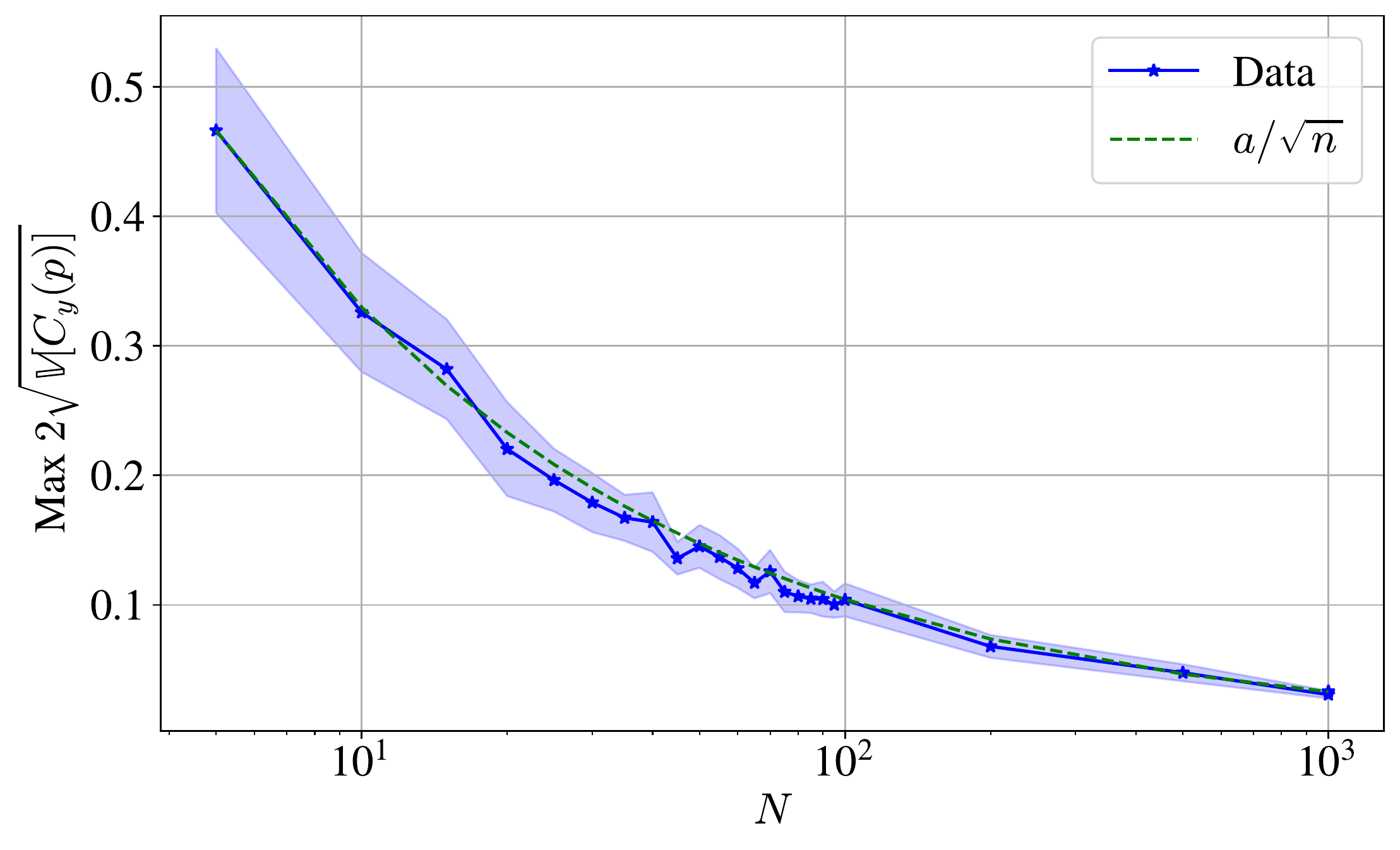}
    \caption{Maximum uncertainty across $p$ for calibration distribution $C_p(y)$ when $N$ samples of $y$ is given for computing the individual calibration curves. We sample 100 models (normal distributions) each with arguments $\mu_i \sim \text{Normal}(0,1)$ and $\sigma_i \sim \text{LogNormal}(1,1)$ each modelling data coming from a standard normal. For each experiment 100 calibration curves, that is 100 independent samples of size $N$ from the true model, constitutes the mean and std. We also plot the function $f(N) = a/\sqrt{N}$ for $a\approx 1.05$. \label{fig:std_calibration_n}}
\end{figure}
Next, we investigate the special case,  where the model is perfect, i.e. $p_t(y|x) = p_y(y|x)$:

\begin{proposition}
Let $F_i$ be the CDF of the predictive distribution perfect model, i.e. $p_t(y|x) = p(y|x)$. If $F_i$ is strictly monotonic, it holds that $\mathbb{V}\left[\mathcal{C}_y(p)\right] = \frac{p(1-p)}{N}$ for all $p$.
\label{prop2}
\end{proposition}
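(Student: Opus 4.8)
The plan is to reduce the claim to the classical probability integral transform. Writing $\mathcal{C}_y(p) = \frac{1}{N}\sum_{i=1}^N z_i$ with $z_i \equiv \mathbb{I}\left[y_i \leq F_i^{-1}(p)\right]$ exactly as in the proof of \cref{prop1}, the only thing that changes in the perfect-model case is that the per-sample variance $\mathbb{V}[z_i]$ can now be computed exactly rather than merely bounded by $1/4$.

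First I would show that each $z_i$ is a Bernoulli$(p)$ random variable. Since the model is perfect, $F_i$ is the true CDF of $y_i$, and since $F_i$ is strictly monotonic (and, being a CDF, right-continuous, hence continuous), the generalized inverse $F_i^{-1}$ defined in \cref{eq:Cy} is a genuine inverse with $F_i\bigl(F_i^{-1}(p)\bigr) = p$ for every $p \in [0,1]$. Therefore $\mathbb{P}(z_i = 1) = \mathbb{P}\bigl(y_i \leq F_i^{-1}(p)\bigr) = F_i\bigl(F_i^{-1}(p)\bigr) = p$; equivalently, $F_i(y_i)$ is uniformly distributed on $[0,1]$ and $\{y_i \leq F_i^{-1}(p)\} = \{F_i(y_i) \leq p\}$. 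This gives $\mathbb{E}[z_i] = p$ and $\mathbb{V}[z_i] = p(1-p)$. Second, I would combine these using independence of the $y_i$ (hence of the $z_i$): $\mathbb{V}[\mathcal{C}_y(p)] = \frac{1}{N^2}\sum_{i=1}^N \mathbb{V}[z_i] = \frac{1}{N^2}\sum_{i=1}^N p(1-p) = \frac{p(1-p)}{N}$, which is the claim. Note this is consistent with \cref{prop1}: since $p(1-p) \leq \tfrac14$ for all $p \in [0,1]$, the exact variance never exceeds the general bound $\tfrac{1}{4N}$, with equality attained at $p = \tfrac12$.

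The main obstacle — really the only nontrivial point — is the careful handling of the generalized inverse: one must verify that strict monotonicity, together with the right-continuity that every CDF enjoys, is enough to guarantee $F_i \circ F_i^{-1} = \mathrm{id}$ on $[0,1]$, so that the inequality $\mathbb{P}(F_i(y_i) \leq p) \geq p$, which holds for an arbitrary CDF, upgrades to an equality. If $F_i$ had a jump, $z_i$ would only be stochastically dominated by a Bernoulli$(p)$ variable and we would recover the inequality of \cref{prop1} rather than the exact identity; the strict-monotonicity hypothesis is precisely what excludes this. Everything else is the same Bernoulli-variance bookkeeping as in \cref{prop1}.
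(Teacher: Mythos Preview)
Your argument is essentially the paper's own proof: apply the probability integral transform to get $z_i = \mathbb{I}[F_i(y_i)\le p]$ with $F_i(y_i)\sim\mathcal{U}[0,1]$, hence each $z_i$ is Bernoulli$(p)$, and then use independence (the paper packages the sum as $\text{Binomial}(N,p)$, you sum the per-term variances---these are the same computation). One small slip worth fixing: strict monotonicity together with right-continuity does \emph{not} imply continuity of $F_i$ (a strictly increasing CDF can still have jumps), and it is continuity, not strict monotonicity, that guarantees $F_i\circ F_i^{-1}=\mathrm{id}$ and hence the exact uniform law; the paper's proof simply invokes the probability integral transform without discussing this, so your treatment is already more careful than the original even after correcting that remark.
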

\begin{proof}
In this setting, we have
\begin{align*}
z_i = \mathbb{I}\left[y_i \leq F_i^{-1}(p)\right] =\mathbb{I}\left[F_i(y_i) \leq p\right] = \mathbb{I}\left[u_i \leq p\right],
\end{align*}
where $u_i \sim \mathcal{U}\left[0, 1\right]$ are uniformly distributed on the unit interval due to the probability integral transform. Since $\{ u_i \}_{i=1}^N$ are also independent, it follows that  $S_n = \sum_{i=1}^N z_i \sim \text{Binomial}(N, p).$
Therefore, we have
\begin{align*}
\mathbb{V}\left[\mathcal{C}_y(p)\right] = \mathbb{V}\left[N^{-1}S_N\right] = N^{-2} \mathbb{V}\left[S_N\right] = N^{-1}p(1-p).
\end{align*}
This completes the proof.
\end{proof}

As seen in the top row of \cref{fig:n-vs-variance}, substantial variability should be expected for small sizes, and consequently, one may erroneously end up concluding that even a perfect model is miscalibrated for sufficiently small $N$. To quantify this relationship futher, we provide a theoretical result for the calibration error in eq. \eqref{eq:cal_error} in this setting. 

\begin{proposition}
Let $E_c = \sum_{j=1}^P w_j (p_j - \mathcal{C}_y(p_j))^2$ be the weighted mean square calibration error. Assume $w_i \in \left[0, 1\right]$ and $0 < p_1 < p_2 < ... < p_P < 1$ are fixed, and assume the CDF of the predictive distribution is equal to the true data distribution (almost everywhere), then it holds that $\mathbb{E}\left[E_c\right] = \frac{1}{P}\sum_{j=1}^P w_jp_j(1-p_j) = \mathcal{O}(N^{-1})$.
\label{thm:cal-err}
\end{proposition}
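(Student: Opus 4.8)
The plan is to reduce the claim to \ref{prop2} by combining linearity of expectation with the standard bias--variance identity, so that essentially no new work is needed beyond what has already been established for the perfect-model case. Writing $E_c = \sum_{j=1}^P w_j (p_j - \mathcal{C}_y(p_j))^2$ (up to the $\tfrac1P$ normalization used in eq.~\eqref{eq:cal_error}), linearity of expectation gives $\mathbb{E}[E_c] = \sum_{j=1}^P w_j\, \mathbb{E}\!\left[(p_j - \mathcal{C}_y(p_j))^2\right]$, so it suffices to control each term $\mathbb{E}[(p_j - \mathcal{C}_y(p_j))^2]$ separately. For a fixed level $p_j$ I would then apply the elementary identity $\mathbb{E}[(c - X)^2] = \mathbb{V}[X] + (c - \mathbb{E}[X])^2$ with $X = \mathcal{C}_y(p_j)$ and $c = p_j$, which reduces everything to the mean and variance of $\mathcal{C}_y(p_j)$ in the perfect-model regime.

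Next I would record that $\mathcal{C}_y(p_j)$ is unbiased for $p_j$. Exactly as in the proof of \ref{prop2}, strict monotonicity of $F_i$ and the probability integral transform yield $z_i = \mathbb{I}[y_i \le F_i^{-1}(p_j)] = \mathbb{I}[u_i \le p_j]$ with $u_i \sim \mathcal{U}[0,1]$ i.i.d., so $N\,\mathcal{C}_y(p_j) = \sum_{i=1}^N z_i \sim \text{Binomial}(N, p_j)$. Hence $\mathbb{E}[\mathcal{C}_y(p_j)] = p_j$, the squared-bias term in the identity above vanishes, and $\mathbb{E}[(p_j - \mathcal{C}_y(p_j))^2] = \mathbb{V}[\mathcal{C}_y(p_j)] = \frac{p_j(1-p_j)}{N}$, where the last equality is precisely \ref{prop2}.

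Substituting back into the sum gives $\mathbb{E}[E_c] = \frac{1}{N}\sum_{j=1}^P w_j p_j(1-p_j)$ (with the $\tfrac1P$ factor restored if one follows eq.~\eqref{eq:cal_error}), which is the claimed closed form. For the order statement I would then note that $P$, the weights $w_j \in [0,1]$, and the levels $p_j$ are all fixed in $N$, and that $p_j(1-p_j) \le \tfrac14$ for every $j$, so $\sum_{j=1}^P w_j p_j(1-p_j) \le \tfrac{P}{4}$ is a constant independent of $N$; therefore $\mathbb{E}[E_c] \le \frac{P}{4N} = \mathcal{O}(N^{-1})$.

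The only genuinely delicate step — and the one I would be most careful to state correctly — is the distributional claim that $N\,\mathcal{C}_y(p_j)$ is \emph{exactly} $\text{Binomial}(N, p_j)$. This requires the $z_i$ to be i.i.d.\ Bernoulli$(p_j)$, which rests on (i) independence of the $y_i$, (ii) each $F_i$ being evaluated at its own observation $y_i$, and (iii) the event $\{F_i(y_i) = p_j\}$ having probability zero so that the probability integral transform applies without boundary corrections; this last point is exactly where the ``CDF equal to the true distribution almost everywhere / strictly monotonic'' hypothesis is used, and it is the same hypothesis already invoked in \ref{prop2}. Once that is in hand, the remaining steps are the bias--variance bookkeeping and the trivial bound $p_j(1-p_j)\le\tfrac14$, so I do not anticipate any further obstacle.
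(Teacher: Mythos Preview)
Your proposal is correct. The underlying ingredients are identical to the paper's proof (i.i.d.\ observations, perfect-model assumption, the Binomial characterization of $N\,\mathcal{C}_y(p_j)$), but the organization differs: the paper expands $(p_j - \mathcal{C}_y(p_j))^2$ directly, then computes the first moment $\mathbb{E}[\mathcal{C}_y(p)] = p$ and the second moment $\mathbb{E}[\mathcal{C}_y(p)^2] = \tfrac{p(1-p)}{N} + p^2$ from scratch via the double sum, and only then collects terms. Your route is more economical: you invoke the bias--variance identity $\mathbb{E}[(c-X)^2] = \mathbb{V}[X] + (c-\mathbb{E}[X])^2$ and read both the unbiasedness and the variance directly off the Binomial fact established in \cref{prop2}, so no moment computation needs to be redone. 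The two arguments are equivalent in content; yours simply makes the dependence on \cref{prop2} explicit and avoids repeating its proof. Your closing remark about where the strict-monotonicity / ``equal a.e.'' hypothesis enters (ruling out mass at $\{F_i(y_i)=p_j\}$) is also more carefully stated than in the paper.
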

\begin{proof}
See supplementary material.
\end{proof}

\textbf{Take-away 6: Calibration curves are not reliable for small sample sizes}
\cref{prop1} and \cref{prop2} states that the variance of the estimator of the empirical calibration decreases with $\mathcal{O}\left(N^{-1}\right)$. This implies that empirical calibration curves are likely to be unreliable for small sample sizes and to improve the accuracy of the estimates by a factor of $10$, one needs to increase the size of the validation set by a factor of $100$, which will often be infeasible in practical BO settings. Furthermore, \cref{thm:cal-err} states that even for a perfect model, the expected calibration error is $\mathcal{O}\left(N^{-1}\right)$. Therefore, for small sample size, one should be careful concluding that a model is mis-calibrated, since the observed calibration error might as well be caused by the sample size. Even worse, when performing re-calibration in this scenario, one might risk adjusting the model in the "wrong direction" causing the model to be more miscalibrated than the original model. 

Although our empirical and theoretical analysis are focused on simple i.i.d. data/models, we expect the effect to be even more severe in the non-i.i.d. case since the effective sample size is typically smaller for correlated samples \citep{thiebaux1984interpretation}. Therefore, we claim that these effects may have profound impact on recalibration in BO protocols.

\paragraph{Future work}
Our study indicates that the common way to diagnose calibration (on a large test set) might not be sensible for BO and that future studies about calibration metrics more relevant to BO are needed. If the aleatoric noise is important for the BO task (and it is at least quasi-homoscedastic), it might be beneficial to spend part of the sampling budget on learning the noise level and subsequently using this as part of the surrogate model. Lastly, we note that it might be interesting to dig deeper into the effects of under- vs. over confidence on BO performance. 
 
\bibliography{references} 
\section*{SUPPLEMENTARY MATERIAL}
\subsection*{Mathematical Proofs}
\textbf{Proposition 1}:
Let $F_i$ be the CDF of the predictive distribution for the $i$'th observation and let $\{ y_i \}_{i=1}^n$ be i.i.d. samples $y_i \sim p_y$. For $\mathcal{C}_y(p) = \frac{1}{n} \sum_{i=1}^n \mathbb{I}\left[y_i \leq F_i^{-1}(p)\right]$, then the variance of $C_y(p)$ is bounded by $1/n$, i.e. $\mathbb{V}\left[C\right] = \mathcal{O}(n^{-1})$. 

\textbf{Proof:}
First, we show that the variance is bounded by $\mathcal{O}(n^{-1})$. We have
\begin{align}
\mathcal{C}_y(p) = \frac{1}{n} \sum_{i=1}^n \mathbb{I}\left[y_i \leq F_i^{-1}(p)\right]= \frac{1}{n} \sum_{i=1}^n z_i,
\end{align}

where $z_i \equiv \mathbb{I}\left[y_i \leq F_i^{-1}(p)\right]$. The variance of $\mathcal{C}_y(p)$ is then by give

\begin{equation}
    \begin{split}
       \mathbb{V}\left[\mathcal{C}_y(p) \right] &= \mathbb{V}\left[\frac{1}{n} \sum_{i=1}^n z_i\right]\\
&= \frac{1}{n^2}\mathbb{V}\left[ \sum_{i=1}^n z_i\right]\\
&= \frac{1}{n^2} \sum_{i=1}^n \mathbb{V}\left[z_i\right]\\
&\leq \frac{1}{n^2} \sum_{i=1}^n \sup\limits_i \mathbb{V}\left[z_i\right]\\
&\leq \frac{1}{n^2} \sum_{i=1}^n \frac{1}{2^2}\\
&= \frac{1}{n} \frac{1}{2^2} 
    \end{split}
\end{equation}

Hence, it also follows the standard deviation of $\mathcal{C}_y(p)$ is bounded by
\begin{align}
\sqrt{\mathcal{C}_y(p)} \leq \sqrt{\frac{1}{n} \frac{1}{2^2}} = \frac{1}{2\sqrt{n}}  = \mathcal{O}\left(\frac{1}{\sqrt{n}}\right).
\end{align}
This completes the proof of the first statement. 

\textbf{Lemma 1}:
Given a perfectly calibrated model, it holds that $\mathbb{V}\left[\mathcal{C}_y(p)\right] = \frac{p(1-p)}{n}$ for all $p$.

\textbf{Proof:}
In this setting, we have
\begin{align}
z_i = \mathbb{I}\left[y_i \leq F_i^{-1}(p)\right] =\mathbb{I}\left[F_i(y_i) \leq p\right] = \mathbb{I}\left[u_i] \leq p\right],
\end{align}
where $u_i \sim \mathcal{U}\left[0, 1\right]$ are uniformly distributed on the unit interval due to the probability integral transform. Since $\{ u_i \}_{i=1}^n$ are also independent, it follows that  
\begin{align}
S_n = \sum_{i=1}^n z_i \sim \text{Binomial}(n, p).
\end{align}

Therefore, it follows that
\begin{equation}
    \begin{split}
        \mathbb{V}\left[\mathcal{C}_y(p)\right] &= \mathbb{V}\left[\frac{1}{n}S\right] = \frac{1}{n^2} \mathbb{V}\left[S\right] \\
        & = \frac{1}{n^2}np(1-p) = \frac{p(1-p)}{n}.
    \end{split}
\end{equation}

This completes the proof.

\textbf{Proposition 2}: Let $E_c = \sum_{j=1}^m w_j (p_j - \mathcal{C}_y(p_j))^2$ be the weighted mean square calibration error. Assume $w_i \in \left[0, 1\right]$ and $0 < p_1 < p_2 < ... < p_m < 1$ are fixed, and assume the CDF of the predictive distribution is equal to the true data distribution (almost everywhere), then it holds that $\mathbb{E}\left[E_c\right] = \frac{1}{n}\sum_{j=1}^m w_jp_j(1-p_j) = \mathcal{O}(n^{-1})$ if $y_i \sim p_y$ are i.i.d. samples.

The calibration error $E_C$ is defined as follows
\begin{align}
E_c &= \sum_{j=1}^m w_j (p_j - \mathcal{C}_y(p_j))^2,
\end{align}
where each $w_i \in \left[0, 1\right]$ is a weight and $0 \leq p_1 < p_2 < ... < p_m < 1$ is predefined set of points.

In order to compute the expectation of $E_C$, we first expand:
\begin{align}
E &= \sum_{j=1}^m w_j (p_j^2 + \mathcal{C}_y(p_j)^2 - 2 p_j \mathcal{C}_y(p_j)) \\
&=   \sum_{j=1}^m w_j\mathcal{C}_y(p_j)^2 - 2  \sum_{j=1}^m w_j p_j \mathcal{C}_y(p_j))
\end{align}

Then it follows that
\begin{align} \label{eq:prop2}
\mathbb{E_C}\left[E\right] &= \mathbb{E}\left[ \sum_{j=1}^m w_j p_j^2 +  \sum_{j=1}^m w_j\mathcal{C}_y(p_j)^2 - 2  \sum_{j=1}^m w_j p_j \mathcal{C}_y(p_j))\right]\\
&= \sum_{j=1}^m w_j p_j^2 +  \sum_{j=1}^m w_j \mathbb{E}\left[ \mathcal{C}_y(p_j)^2\right] - 2  \sum_{j=1}^m w_j p_j \mathbb{E}\left[ \mathcal{C}_y(p_j)\right].
\end{align}

The first moment evaluates to
\begin{align}
    \mathbb{E}[C_y(p)] &= \int_{-\infty}^{\infty}  \mathbb{I}[y_t \leq F_t^{-1}(p)] p_y \text{d} y \\
    &= \int_{-\infty}^{F_t^{-1}(p)} p_y \text{d} y\\
    &= F_y (F_t^{-1}(p)) \\
    &= p.
\end{align}

Similarly, the second moment evaluates to
\begin{align}
\mathbb{E}\left[\mathcal{C}_y(p)^2\right] &= \mathbb{E}\left[\left(\frac{1}{n} \sum_{i=1}^n z_i\right)^2\right]\\
&= \frac{1}{n^2} \mathbb{E}\left[\sum_{i=1}^n \sum_{j=1}^n z_i z_j\right]\\
&= \frac{1}{n^2} \sum_{i=1}^n \mathbb{E}\left[z_i^2\right] + \frac{1}{n^2}\sum_{j\neq i} \mathbb{E}\left[z_i z_j \right]\\
&= \frac{1}{n^2} \sum_{i=1}^n p + \frac{1}{n^2}\sum_{j\neq i} \mathbb{E}\left[z_i\right] \mathbb{E}\left[z_j \right]\\
&= \frac{n}{n^2} p + \frac{1}{n^2}\sum_{j\neq i} p^2\\
&= \frac{1}{n} p + \frac{1}{n^2}\left(n^2 - n\right) p^2
\end{align}

Rearranging the terms yields
\begin{equation}
    \begin{split}
        \mathbb{E}\left[\mathcal{C}_y(p)^2\right] %
&= \frac{1}{n} p + \frac{n^2 - n}{n^2} p^2\\
&= \frac{1}{n} p - \frac{1}{n} p^2 + p^2\\
&= \frac{p(1-p)}{n} + p^2
    \end{split}
\end{equation}

Substituting the moments into eq. \eqref{eq:prop2} yields
\begin{equation}
    \begin{split}
\mathbb{E}\left[E_C\right] &= \sum_{j=1}^m w_j p_j^2 +  \sum_{j=1}^m w_j \left[\frac{p_j(1-p_j)}{n} + p_j^2\right]  \\
&- 2  \sum_{j=1}^m w_j p_j^2\\
&= \sum_{j=1}^m w_j p_j^2 +  \sum_{j=1}^m w_j \frac{p_j(1-p_j)}{n}  \\
&+ \sum_{j=1}^m w_j p_j^2  - 2  \sum_{j=1}^m w_j p_j^2\\
&= \frac{1}{n}\sum_{j=1}^m w_jp_j(1-p_j)\\
&= \mathcal{O}(n^{-1}).
    \end{split}
\end{equation}

This completes the proof.

\subsection*{If $p_y$ and $p_t$ are normal distributions \label{sec:p_yt_normals}}    
For non-perfect models we have that $F_y (F_t^{-1}(p)) = g(p)$ where in general $g(p) \neq p$. If both $p_y$ and $p_t$ are normal distributions, the CDF and inverse CDF of a normal are, respectively, given by
\begin{align*}
    F(x) &= \frac{1}{2}\left[1 + \text{erf}\left(\frac{x-\mu}{\sigma \sqrt{2}}\right)\right] \\
    F^{-1}(p) &= \mu + \sigma \sqrt{2} \text{erf}^{-1}\left(2p-1\right) \\
\end{align*}
When data comes from $y_t \sim \mathcal{N}(\mu_y,\sigma_y^2)$ and the model is $\mathcal{N}(\mu_t,\sigma_t^2)$, we can write the expectation of the calibration curve as follows

\begin{align*}
    g(p) &= F_y(F_t^{-1}(p)) \\
    &=  \frac{1}{2}\left[1 + \text{erf}\left(\frac{F_t^{-1}(p)-\mu_y}{\sigma_y \sqrt{2}}\right)\right]  \\
    &=  \frac{1}{2}\left[1 + \text{erf}\left(\frac{\mu_t + \sigma _t\sqrt{2} \text{erf}^{-1}\left(2p-1\right)-\mu_y}{\sigma_y \sqrt{2}}\right)\right]  \\
    &=  \frac{1}{2}\left[1 + \text{erf}\left(\frac{\mu_t -\mu_y}{\sigma_y \sqrt{2}} + \frac{\sigma _t}{\sigma_y }\text{erf}^{-1}\left(2p-1\right)\right)\right]  \\
    &=  \frac{1}{2}\left[1 + \text{erf}\left(\frac{\mu_t -\mu_y}{\sigma_y \sqrt{2}} + \frac{\sigma _t}{\sigma_y }\text{erf}^{-1}\left(2p-1\right)\right)\right]  \\
\end{align*}

which also evaluates to $p$ for a perfect model:
\begin{align*}
    g(p) &= \frac{1}{2}\left[1 + \text{erf}\left(\frac{0}{\sigma_y \sqrt{2}} + 1 \cdot \text{erf}^{-1}\left(2p-1\right)\right)\right]  \\
    &= \frac{1}{2}\left[1 + 2p-1\right]  \\
    &= p  \\
\end{align*}

\end{document}